\documentclass{article}

\usepackage{fullpage}
\usepackage[small]{caption}
\usepackage[utf8]{inputenc} 
\usepackage[T1]{fontenc}    
\usepackage[hidelinks]{hyperref}       
\usepackage{url}            
\usepackage{booktabs}       
\usepackage{amsfonts}       
\usepackage{nicefrac}       
\usepackage{microtype}      
\usepackage{algorithm}
\usepackage{algpseudocode}
\usepackage{amsmath}
\usepackage{amssymb}
\usepackage{amsthm}
\usepackage{algorithm}
\usepackage{algpseudocode}
\usepackage{mathtools}
\usepackage{breakcites}
\usepackage{csquotes}
\usepackage{colortbl}
\usepackage{natbib}
\usepackage[shortlabels,inline]{enumitem}
\usepackage{arydshln}
\usepackage{mathabx}

\makeatletter
\newcommand{\inlineitem}[1][]{%
\ifnum\enit@type=\tw@
    {\descriptionlabel{#1}}
  \hspace{\labelsep}%
\else
  \ifnum\enit@type=\z@
       \refstepcounter{\@listctr}\fi
    \quad\@itemlabel\hspace{\labelsep}%
\fi}
\makeatother
\usepackage{thm-restate}
\usepackage{tabularx}
\usepackage{xcolor}
\usepackage{xspace}

 \usepackage{natbib}
 \usepackage{bm}

\newtheorem{lemma}{Lemma}
\newtheorem{theorem}[lemma]{Theorem}

\newtheorem{informal theorem}[lemma]{Informal Theorem}
\usepackage[shortlabels,inline]{enumitem}


\newcommand{\poly}{\mbox{poly}}
\newcommand{\diag}{\mbox{diag}}
\newcommand{\nnz}{\mbox{nnz}}

\newcommand{\eps}{\ensuremath{\varepsilon}}

\newcommand{\remove}[1]{}


\usepackage{colortbl}%

\title{Sublinear Time Numerical Linear Algebra for Structured Matrices}

\author
{Xiaofei Shi, David P. Woodruff\\Carnegie Mellon University \\ {\texttt xiaofei.shi@andrew.cmu.edu, dwoodruf@cs.cmu.edu}}
\date{}

\begin{document}

\maketitle

\begin{abstract}
We show how to solve a number of problems in numerical linear
algebra, such as least squares regression, $\ell_p$-regression for any $p \geq 1$, 
low rank approximation, and kernel regression, in time $T(A)   \poly(\log(nd))$, 
where for a given input matrix $A \in \mathbb{R}^{n \times d}$, 
$T(A)$ is the time needed to compute 
$A\cdot y$ for an arbitrary vector $y \in \mathbb{R}^d$. Since $T(A) \leq O(\nnz(A))$,
where $\nnz(A)$ denotes the number of non-zero entries of $A$, the time is no worse,
up to polylogarithmic factors, as all of the recent advances for such problems that
run in input-sparsity time. However, for many applications, $T(A)$ can be much
smaller than $\nnz(A)$, yielding significantly sublinear time algorithms. 
For example, in the overconstrained $(1+\epsilon)$-approximate polynomial 
interpolation problem, $A$ is a Vandermonde
matrix and $T(A) = O(n \log n)$; in this case our running time is
$n   \cdot \poly(\log n) + \poly(d/\epsilon)$
and we recover the results of \cite{avron2013sketching} as a special case. For overconstrained  
autoregression, which is a common problem arising in dynamical systems, 
$T(A) = O(n \log n)$, and 
we immediately obtain $n \cdot \poly(\log n) + \poly(d/\epsilon)$ time. 
For kernel autoregression, we significantly improve the running time
of prior algorithms for general kernels. 
For the important case of autoregression with the polynomial kernel and arbitrary target vector $b\in\mathbb{R}^n$, 
we obtain even faster algorithms. Our algorithms 
show that, perhaps surprisingly, most of these optimization problems do not require much more
time than that of a polylogarithmic number of matrix-vector multiplications. 
\footnote{A first version of this paper appeared in AAAI in February, 2019.}
\end{abstract}

\section{Introduction}
\label{sec:intro}
A number of recent advances in randomized numerical linear algebra have been made possible by the technique of oblivious sketching. In this setting, given an $n \times d$ input  matrix $A$ to some problem, one first computes a sketch $S   A$ where $S$ is a random matrix drawn from a certain random family of matrices. Typically $S$ is wide and fat, and therefore applying $S$ significantly reduces the number of rows of $A$. Moreover, $S   A$ preserves structural information about $A$. 

For example, in the least squares regression problem one is given an $n \times d$ matrix $A$ and an $n \times 1$ vector $b$ and one would like to output a vector $x \in \mathbb{R}^d$ for which
\begin{align}\label{eqn:regression}
\|Ax-b\|_2 \leq (1+\epsilon)\min_x \|Ax-b\|_2,
\end{align}
where for a vector $y$, $\|y\|_2 = \left (\sum_i |y_i|^2 \right )^{1/2}$. Typically the $n$ rows of $A$ correspond to observations, and one would like the prediction $\langle A_i, x \rangle$ to be close to the observation $b_i$, where $A_i$ denotes the $i$-th row of $A$. While it can be solved exactly via the normal equations, one can solve it much faster using oblivious sketching. Indeed, for overconstrained least squares where $n \gg d$, one can choose $S$ to be a subspace embedding, meaning that simultaneously for all vectors $x \in \mathbb{R}^d$, $\|SAx\|_2 = (1 \pm \epsilon) \|Ax\|_2$. In such applications, $S$ has only $\poly(d/\epsilon)$ rows, independent of the large dimension $n$. By computing $S A$ and $S b$, and solving $x' = \textrm{argmin}_x \|SAx-Sb\|_2$, one has that $x'$ satisfies (\ref{eqn:regression}) with high probability. Thus, much of the expensive computation is reduced to the ``sketch space'', which is independent of $n$. 

Another example is low rank approximation, in which one is given an $n \times d$ matrix $A$ and would like to find an $n \times k$ matrix $U$ and a $k \times d$ matrix $V$ so that
\begin{align}\label{eqn:lowRank}
\|U   V - A\|_F^2 \leq (1+\epsilon)\|A-A_k\|_F^2,
\end{align}
where for a matrix $B \in \mathbb{R}^{n \times d}$, $\|B\|_F^2 = \sum_{i =1}^n \sum_{j=1}^d B_{i,j}^2$, and where $A_k = \textrm{argmin}_{\textrm{rank-} k \ B} \|A-B\|_F^2$ is the best rank-$k$ approximation to $A$. While it can be solved via the singular value decomposition (SVD), one can solve it much faster using oblivious sketching. In this case one chooses $S$ so that the row span of $S   A$ contains a good rank-$k$ space, meaning that there is a matrix $V \in \mathbb{R}^{k \times d}$ whose row span is inside of the row span of $S   A$, so that there is a $U$ for which this pair $(U,V)$ satisfies the guarantee of
(\ref{eqn:lowRank}). Here $SA$ only has $\poly(k/\epsilon)$ rows, independent of $n$ and $d$. Several known algorithms approximately project the rows of $A$ onto the row span of $SA$, then compute the SVD of the projected points to find $V$, and then solve a regression problem to find $U$. Other algorithms compute the top $k$ directions of $SA$ directly. Importantly, the expensive computation involving the SVD can be carried out in the much lower $\poly(k/\epsilon)$-dimensional space rather than the original $d$-dimensional space. 

While there are numerous other examples, such as $\ell_p$-regression and kernel variations of the above problems (see \cite{w14} for a survey), they share the same flavor of first reducing the problem to a smaller problem in order to save computation. For this reduction to be effective, the matrix-matrix product $S   A$ needs to be efficiently computable. One typical sketching matrix $S$ that works is a matrix of i.i.d. Gaussians; however since $S$ is dense, computing $S   A$ is slow. Another matrix which works is a so-called fast Johnson-Lindenstrauss transform, see~\cite{s06}. As in the Gaussian case, $S$ has a very small number of rows in the above applications, and computing $SA$ can be done in $\tilde{O}(nd)$ time, where $\tilde{O}(f)$ denotes a function of the form $f \cdot \poly(\log f)$. This is useful if $A$ is dense, but often $A$ is sparse and may have a number $\nnz(A)$ of non-zero entries which is significantly smaller than $nd$. Here one could hope to compute $S   A$ in $\nnz(A)$ time, which is indeed possible using a CountSketch matrix, see~\cite{cw13,mm13,nn13}, also with a small number of rows. 

For most problems in numerical linear algebra, one needs to at least read all the non-zero entries of $A$, as otherwise one could miss reading a potentially very large entry. For example, in the low rank approximation problem, if there is one entry which is infinite and all other entries are small, the best rank-$1$ approximation would first fit the single infinite-valued entry. From this perspective, the above $\nnz(A)$-time algorithms are optimal. However, there are many applications for which $A$ has additional structure. For example, the polynomial interpolation problem is a special case of regression in which the matrix $A$ is a Vandermonde matrix. As observed in \cite{avron2013sketching}, in this case if $S \in \mathbb{R}^{\textrm{poly}(d/\epsilon) \times n}$ is a CountSketch matrix, then one can compute $S   A$ in $O(n \log n) + \poly(d/\epsilon)$ time. This is {\it sublinear} in the number of non-zero entries of $A$, which may be as large as $nd$ thus may be much larger than $O(n \log n) + \poly(d/\epsilon)$. The idea of \cite{avron2013sketching} was to simultaneously exploit the sparsity of $S$, together with the fast multiplication algorithm based on the Fast Fourier Transform associated with Vandermonde matrices to reduce the computation of $S   A$ to a small number of disjoint matrix-vector products. A key fact used in \cite{avron2013sketching} was that submatrices of Vandermonde matrices are also Vandermonde, which is a property that does not hold for other structured families of matrices, such as Toeplitz matrices, which arise in applications like autoregression. There are also sublinear time low rank approximation algorithms of matrices with other kinds of structure, like PSD and distance matrices, see~\cite{mw17,bw18}. 

An open question, which is the starting point of our work, is if one can extend the results of \cite{avron2013sketching} to {\it any structured matrix} $A$. More specifically, {\it 
can one solve all of the aforementioned linear algebra problems in time $T(A)$ instead of $\nnz(A)$
, where $T(A)$ is the time required to compute $A   y$ for a single vector $y$? }
For many applications, discussed more below, one has a structured matrix $A$ with $T(A) =O( n \log n) \ll \nnz(A)$. 

{\bf Our Contributions.}
We answer the above question in the affirmative, showing that for a number of problems in numerical linear algebra, one can replace the $\nnz(A)$ term with a $T(A)$ term in the time complexity. Perhaps surprisingly, we are not able to achieve these running times via oblivious sketching, but rather need to resort to sampling techniques, as explained below. 
We state our formal results:
\begin{itemize}
\item{\it Low Rank Approximation:} Given an $n \times d$ matrix $A$, we can find $U \in \mathbb{R}^{n \times k}$ and $V \in \mathbb{R}^{k \times d}$ satisfying (\ref{eqn:lowRank}) in $O\left(T(A)  \log n + n \cdot \poly(k/\epsilon)\right)$ time. 

\item {\it $\ell_p$-Regression:} Given an $n \times d$ matrix $A$ and an $n \times 1$ vector $b$, one would like to output an $x \in \mathbb{R}^d$ 
for which 
\begin{align}\label{eqn:lp}
\|Ax-b\|_p \leq (1+\epsilon) \min_x \|Ax-b\|_p,
\end{align}
where for a vector $y$, $\|y\|_p = (\sum_i |y_i|^p)^{1/p}$. 
We show for any real 
number $p \geq 1$, we can solve this problem in $O\left(T(A)\log n + \poly(d/\epsilon)\right)$ time. This includes least squares regression ($p = 2$) as a special case.

\item {\it Kernel Autoregression:} 
A kernel function is a mapping $\phi:\mathbb{R}^p \rightarrow \mathbb{R}^{p'}$ where $p' \geq p$ so that the inner product $\langle \phi(x), \phi(y) \rangle$ between any two points $\phi(x), \phi(y) \in \mathbb{R}^{p'}$ can be computed quickly given the inner product $\langle x, y \rangle$ between $x, y \in \mathbb{R}^{p}$. Such mappings are useful when it is not possible to find a linear relationship between the input points, but after lifting the points to a higher dimensional space via $\phi$ it become possible. We are given a matrix $A\in\mathbb{R}^{n p \times d}$ for which the rows can be partitioned into $n$ contiguous $p \times d$ block matrices $A^1, \ldots, A^n$. Further, we are in the setting of autoregression, so  
for $j = 2, \ldots, n$, $A^j$ is obtained from $A^{j-1}$ by setting the $\ell$-th column $A^j_{\ell}$ of $A^j$ to be the $(\ell-1)$-st column of $A^{j-1}$, namely, to $A^{j-1}_{\ell-1}$. The first column $A^j_1$ of $A^j$ is allowed to be arbitrary. 
Let $\phi(A)$ be the matrix obtained from $A$ by replacing each block $A^j$ with $\phi(A^j)$, where $\phi(A^j)$ is obtained from $A^j$ by replacing each column $A^j_{\ell}$ with $\phi(A^j_{\ell})$. 
We are also given an $(n   p') \times 1$ vector $b$, perhaps implicitly. 
We want $x \in \mathbb{R}^d$ to minimize $\| \phi(A)x-b\|_2$. 
\end{itemize}

For general kernels not much is known, though prior work \cite{kumar2007kernel} shows how to find a minimizer $x$ assuming i.i.d. Gaussian noise. Their running time is $O(n^2 t)$, where $t$ is the time to evaluate $\langle \phi(x), \phi(y) \rangle$ given $x$ and $y$. We show how to improve this to $O(ndt + d^{\omega})$ time, 
where $\omega \approx 2.376$ is the exponent of fast matrix multiplication. Note for autoregression that $b$ has the form $[\phi(c^1); \phi(c^2); \ldots; \phi(c^n)]$ for certain vectors $c^1, \ldots, c^n$ that we know. As $n \gg d$ in overconstrained regression, our $O(ndt + d^{\omega})$ time is faster than the $O(n^2 t + d^{\omega})$ time of earlier work. For dense matrices $A$, describing $A$ already requires $\Omega(ndp)$ time, so in the typical case 
when $t \approx p$, we are optimal for such matrices.
We note that prior work \cite{kumar2007kernel} assumes Gaussian noise, while we 
do not make such an assumption. 

While the above gives an improvement for general kernels, one could also hope for much faster algorithms. In general we would like an $x$ for which:
\begin{align}\label{eqn:regress}
\|\phi(A)x - b\|_2 \leq (1+\epsilon)\min_x \|\phi(A)x-b\|_2.
\end{align}
We show how to solve this in the case that $\phi$ corresponds to the polynomial kernel of degree $2$, though discuss extensions to $q > 2$. 
In this case, $\langle \phi(x), \phi(y) \rangle = \langle x, y \rangle^q$. 
The running time of our algorithm is $O(\nnz(A)) + \textrm{poly}(pd/\epsilon)$. Note that $b$ is an {\it arbitrary} $n   p'$-dimensional vector, and our algorithm runs in sublinear time in the length of $b$ - this is possible by judiciously sampling certain coordinates of $b$. Note even for dense matrices, $\nnz(A) \leq n p d$, which does not depend on the large value $p'$. We also optimize the $\poly(pd/\epsilon)$ term. 

{\bf Applications.} Our results are quite general, recovering the results of \cite{avron2013sketching} for Vandermonde matrices which have applications to polynomial fitting and additive models as a special case. We refer the reader to \cite{avron2013sketching} for details, and here we focus on other implications. One application is to autoregression, which is a time series model which uses observations from previous time 
steps as input to a regression problem to predict the value in the next time step, and
can provide accurate forecasts on time series problems. It is
often used to model stochastic time-varying processes in nature, economics, etc. 
Formally, in autoregression we have:
\begin{align}{\label{eqn:ar}}
b_t = \sum_{i=1}^d b_{t-i} x_i + \epsilon_t,
\end{align}
where $d +1\leq t \leq n+d$, 
and the $\epsilon_t$ correspond to the noise in the model. 
We note that $b_1$ is defined to be $0$. 
This model is known as the $d$-th order Autoregression model (AR(d)). 

The underlying matrix in the AR(d) model corresponds to the first $d$
columns of a {\it Toeplitz matrix}, and consequently one can compute $A^TA$
in $O(n d \log n)$ time, which is faster than the $O(nd^{\omega-1})$ time which assuming $d > \poly(\log n)$
for computing $A^T A$ for general matrices $A$, where $\omega \approx 2.376$ is the exponent of matrix multiplication. 
Alternatively, one can apply
the above sketching techniques which run in time 
$O(\nnz(A)) + \poly(d/\epsilon) = O(nd) + \poly(d/\epsilon)$. Either way, this gives a time of $\Omega(nd)$. We show
how to solve this problem in $O(n \log^2 n) + \poly(d/\epsilon)$
time, which is a significant improvement over the above methods whenever 
$d>\log^2 n $. There are a number of other works on Toeplitz linear systems and regression see~\cite{rom8,rom10,rom11,rom12,rom13,rom14}; our work is the first row sampling-based algorithm, and this technique will be crucial for obtaining our polynomial kernel results. More generally,
our algorithms only depend on $T(A)$, rather than on specific properties of $A$. If instead of just a Toeplitz
matrix $A$, one had a matrix of the form $A+B$, where $B$ is an arbitrary matrix with $T(B) = O(n \log n)$, 
e.g., a sparse perturbation to $A$, we would obtain the same running time. 

Another stochastic process model is
the vector autoregression (VAR), in which one replaces 
the scalars $b_t\in\mathbb{R}$ in (\ref{eqn:ar}) with points in $\mathbb{R}^p$. This
forecast model is used in Granger causality, impulse responses, forecast error
variance decompositions, and health research \cite{health}. An extension is
kernel autoregression \cite{kumar2007kernel}, where we additionally have a kernel function 
$\phi: \mathbb{R}^p \rightarrow \mathbb{R}^{p'}$ with $p' > p$,  
and further replace $b_t$ with $\phi(b_t)$ in (\ref{eqn:ar}). 
One wants to find
the coefficients $x_1, \ldots, x_d$ fitting the points $\phi(b_t)$ without
computing $\phi(b_t)$, which may not be possible since $p'$ could be very large
or even infinite. To the best of our knowledge, our results 
give the fastest known algorithms for VAR and kernel autoregression. 

{\bf Our Techniques.}
Unlike the result in \cite{avron2013sketching} for Vandermonde matrices, many of our results for other structured matrices {\it do not use oblivious sketching}. We illustrate the difficulties for least squares regression of using oblivious sketching. In \cite{avron2013sketching}, given an $n \times d$ Vandermonde matrix $A$, one wants to compute $S   A$, where $S$ is a CountSketch matrix. 
For each $i$, the $i$-th row of $A$ has the form $(1, x_i, x_i^2, \ldots, x_i^{d-1})$. $S$ has $r = \poly(d/\epsilon)$ rows and $n$ columns, and each column of $S$ has a single non-zero entry located at a uniformly random chosen position. Denote the entry in the $i$-th column as $h(i)$, then $SA$ decomposes into $r$ matrix-vector products, where each row of $A$ participates in exactly one matrix product. Namely, we can group the rows of $A$ into submatrices $A^i$ and create a vector $x^i$ which indexes the subset of coordinates $j$ of $x$ for which $h(j) = i$. The $i$-th row of $S   A$ is precisely $x^i   A^i$. For a submatrix $A^i$ of a Vandermonde matrix, the product $x^i   A^i$ can be computed in $O(s_i \log s_i)$ time, where $s_i$ is the number of rows of $A^i$. The total time to compute $S   A$ is thus $O(n \log n)$. 

Now suppose $A \in \mathbb{R}^{n \times d}$, $d \ll n$, is a rectangular Toeplitz matrix, i.e., the $i$-th row of $A$ is obtained by shifting the $(i-1)$-st row to the right by one position, and including an arbitrary entry in the first position. Toeplitz matrices are the matrices which arise in autoregression. We can think of $A$ as a submatrix of a square Toeplitz matrix $C$, and can compute $x   C$ for any vector $x$ in $O(n \log n)$ time. Unfortunately though, an $r \times d$ submatrix $A^i$ of a Toeplitz matrix, $r > d$, may not have an efficient multiplication algorithm. Indeed, imagine the $r$ rows correspond to disjoint subsets of $d$ coordinates of a Toeplitz matrix. Then computing $x   A^i$ would take $O(rd)$ time, whereas for a Vandermonde matrix one could always multiply a vector times an $r \times d$ submatrix in only $O(r \log r)$ time. Vandermonde matrices are a special sub-class of structured matrices which are closed under taking sub-matrices, which we do not have in general.

Rather than using oblivious sketching, we instead use {\it sampling-based techniques}. A
first important observation is that the sampling-based techniques for subspace approximation \cite{c14},
low rank approximation \cite{cohen2017input}, 
and $\ell_p$-regression \cite{cohen2015p}, can each be implemented
with only $t = O(\log n)$ matrix-vector products between the input matrix $A$ and certain arbitrary vectors
$v^1, \ldots, v^t$ arising throughout the course of the algorithm. We start by verifying this property
for each of these important applications, allowing us to replace the $\nnz(A)$ term with a $T(A)$ term. We
then give new algorithms for autoregression, for which the design matrix is a truncated
Toeplitz matrix, and more generally composed with a difference and a diagonal matrix. 

Our technically
more involved results are then for kernel autoregression. 
First for general kernels, we show how to accelerate the computation of $\phi(A)^T \phi(A)$ using the Toeplitz
nature of autoregression, 
and observe that only $O(nd)$ inner products ever need to be
computed, even though there are $\Theta(n^2)$ possible inner products. 
We then show how to solve polynomial kernels of degree $q$. We focus on 
$q = 2$ though our arguments can be extended to $q > 2$.  
We first use oblivious sketching to compute
a $d \times O(\log n)$ matrix $RG$ from which, via standard arguments, it suffices to sample $O(d \log d + d/\epsilon)$ 
row indices $i$ proportional to $\|e_i \phi(A) RG\|_2^2$, where $e_i$ is the $i$-th standard
unit vector. 
Given the sampled row indices $i$,
one can immediately find the $i$-th row of $\phi(A)$, 
since the index $i$ corresponds to a $q$-tuple $(i_1, \ldots, i_q)$ of a block $\phi(A^j)$ with columns
$\phi(A^j_{\ell})$, for $\ell \in \{1, 2, \ldots, d\}$, and so 
$e_i   \phi(A) e_k = A^j_{i_1, k}   A^j_{i_2, k} \cdots A^j_{i_q, k}$. We can also directly read off 
the corresponding entry from $b$. The $j$-th row of $S$ is also just $\sqrt{\frac{1}{p_i}}   e_i$ if
row index $i$ is the $j$-th sampled row, where $p_i$ is the probability of sampling $i$. Further, 
the matrices $R   G$ and $S$ can be found in $O(\nnz(A) + d^3)$ time using earlier work \cite{cw13,anw14}. We show to find the set of $O(d \log d + d/\epsilon)$ sampled row indices quickly.
Here we use that 
 $\phi(A)$ is ``block Toeplitz'', together
with a technique of replacing blocks of $\phi(A)$ with ``sketched blocks'', which allows us to sample blocks of $\phi(A) RG$ proportional to their squared norm. We then need to obtain a sampled index inside of a block, 
and for the polynomial kernel of degree $2$ 
we use the fact that the entries of $\phi(A^j)   y$ for a vector $y$
are in one-to-one correspondence with the entries of $A^{j-1}   D_y   (A^{j-1})^T$, 
where $D_y$ is a diagonal matrix with $y$ along the diagonal. We do not need to compute
$A^{j-1}   D_y   (A^{j-1})^T$, but can compute $H   A^{j-1}   D_y   (A^{j-1})^T$
for a matrix $H$ of i.i.d. Gaussians in order to sample a column of $A^{j-1}   D_y   (A^{j-1})^T$
proportional to its squared norm, after which we can compute the sampled column exactly and output an entry
of the column proportional to its squared value. 
Here we use the Johnson Lindenstrauss lemma to argue
that $H$ preserves column norms. 
A similar identity holds for degrees $q > 2$, and that identity 
was used in the context of Kronecker product regression
\cite{dswy19}. 

\section{Fast Algorithms Based on Sampling}\label{fast algo}
We first consider $\min_x \|Ax-b\|_2$, where $A \in \mathbb{R}^{n \times d}, b \in \mathbb{R}^{n \times 1}$, and $n > d$. We show how, in
$O(T(A)   \poly(\log n) + \poly(d (\log n)/\epsilon))$ time, to reduce this to a 
problem $\min_x \|SAx-Sb\|_2$, where 
$SA \in \mathbb{R}^{r \times d}$ and $Sb \in \mathbb{R}^{r \times 1}$
such that if $\hat{x} = \textrm{argmin}_x \|SAx-Sb\|_2$, then
\begin{align}\label{eqn:guarantee}
\|A\hat{x}-b\|_2 \leq (1+\epsilon) \min_x \|Ax-b\|_2.
\end{align} 
Here
$r = O(d/\epsilon^2)$. Given $SA$ and $Sb$, one can compute $\hat{x} = (SA)^- Sb$ in
$\poly(d/\epsilon)$ time. For (\ref{eqn:guarantee}) to hold, 
it suffices for the matrix
$S$ to satisfy the property that for all $x$, 
$\|SAx-Sb\|_2 = (1\pm \epsilon)\|Ax-b\|_2$. This is 
implied if for any fixed $n \times (d+1)$ matrix $C$, 
$\|SCx\|_2 = (1 \pm \epsilon)\|Cx\|_2$ for all $x$.
Indeed, in this case we may set $C = [A, b]$. 
This problem is sometimes called the {\it matrix approximation} problem.

\noindent{\bf The {\sc Repeated Halving} Algorithm.}
The following
algorithm for matrix approximation is given in \cite{c14}, and called
{\sc Repeated Halving}.

\begin{algorithm}[H]
\caption{Repeated Halving}
\begin{algorithmic}[1]
\Procedure{RepeatedHalving}{$C \in \mathbb{R}^{n \times (d+1)}$}
\State \hspace{-4mm}{Uniformly sample $n/2$ rows of $C$ to form $C'$}
\State \hspace{-4mm}{If $C'$ has more than $O((d \log d) / \epsilon^2)$ rows, recursively
compute a spectral approximation $\tilde{C}'$ of $C'$ }
\State \hspace{-4mm}{Approximate generalized leverage scores of $C$ w.r.t. $\tilde{C}'$}
\State \hspace{-4mm}{Use these estimates to sample rows of $C$ to form $\tilde{C}$}
\State \hspace{-4mm}\Return $\tilde{C}$
\EndProcedure
\end{algorithmic}
\end{algorithm}
\noindent{\bf Leverage Score Computation.}
We first clarify step 4 in {\sc Repeated Halving}, which is a 
standard Johnson-Lindenstrauss trick for speeding up leverage score
computation \cite{DMMW12}. The $i$-th
generalized leverage score of a matrix $C$ with $n$ rows 
w.r.t. a matrix $B$ is defined
to be $\tau_i^{B}(C) = c_i^T (B^T B)^+c_i = \|B(B^TB)^+c_i\|_2^2$, where $c_i$
is the $i$-th row of $C$, written as a column vector. The idea
is to instead compute $\|GB(B^TB)^+c_i\|_2^2$, where $G$ is a random
Gaussian matrix with $O(\log n)$ rows. The Johnson-Lindenstrauss
lemma and a union bound yield 
$\|GB(B^TB)^+c_i\|_2^2 = \Theta(1)\|B(B^TB)^+c_i\|_2^2$. If $B$ is 
$O((d \log d) / \epsilon^2) \times d$, then $(B^TB)^+$ can be computed
in $\poly(d/\epsilon)$ time. We compute
$GB$ in $O((d^2/\epsilon^2)\log n)$ time, 
which is an $O(\log n) \times d$ matrix.
Then we compute $(GB)   (B^TB)^+$, which now takes only $O(d^2 \log n)$
time, and is an $O(\log n) \times d$ matrix. Finally one can compute
$GB (B^TB)^+ C^T$ in $O(\nnz(C)\log n)$ time, and the squared column
norms are constant factor approximations to the $\tau_i^{B}(C)$ values. 
The total time
to compute all $i$-th generalized leverage scores is
$O(\nnz(C) \log n) + \poly(d \log n / \epsilon)$. 

\noindent{\bf Sampling. }
We clarify how step 5 in {\sc Repeated Halving} works, 
which is a standard
leverage score sampling-based procedure, see, e.g., \cite{m11}. 
Given a list of approximate
generalized leverage scores $\tilde{\tau}_i^{B}(C)$, we sample
$O((d \log d)/\epsilon^2)$ rows of $C$ independently proportional to 
form $\tilde{C}$.
We write this as $\tilde{C} = S   C$, where the $i$-th row
of $S$ has a $1/\sqrt{p_{j(i)}}$ in the $j(i)$-th position, where
$j(i)$ is the row of $C$ sampled in the $i$-th trial, and 
$p_{j(i)} = \tilde{\tau}_i^{B}(C)/\sum_{i'=1, \ldots, n} \tilde{\tau}_{i'}^B(C)$
is the probability of sampling $j(i)$ in the $i$-th trial. Here
$S$ is called a {\it sampling and rescaling} matrix.
Sampling independently from a distribution
on $n$ numbers with replacement $O((d \log d)/\epsilon^2)$ times can be done
in $O(n + (d \log d)/\epsilon^2)$ time \cite{v91}, giving a total
time spent in step 5 
of $O(n \log n + (d \log d)(\log n)/\epsilon^2)$ across all $O(\log n)$
recursive calls. 
As argued in \cite{c14}, the error probability is at most $1/100$, which
can be made an arbitrarily small constant by appropriately setting the
constants in the big-Oh notation above. 

\noindent{\bf Speeding up {\sc Repeated Halving}.}
We now show how to speed up the {\sc Repeated Halving} algorithm.
Step 2 of {\sc Repeated Halving} can be implemented just by choosing
a subset of {\it row indices} in $O(n)$ time. Step 3 just involves
checking if the number of uniformly sampled rows 
is larger than $O((d \log d)/\epsilon^2)$, which can be done in constant
time, and if so, a recursive call is performed. The number of recursive
calls is at most $O(\log n)$, since Step 1 halves the number of rows. So
the total time spent on these steps is 
$O(n \log n + (d \log d) (\log n)/\epsilon^2)$. 

In step 4 of {\sc Repeated Halving}, we compute generalized leverage
scores of $C$ with respect to a matrix $\tilde{C}'$, and is only 
(non-recursively) applied when $\tilde{C'}$ has $O((d \log d)/\epsilon^2)$
rows. As described when computing leverage scores with $B = \tilde{C'}$, 
we must do the following:
\begin{enumerate}
\item  Compute $G\tilde{C'} \in\mathbb{R}^{O(\log n) \times d}$ in time $O((d^2/\epsilon^2)\log n)$

\item Compute $((\tilde{C'})^T \tilde{C'})^+$ in time $O(d^3/\epsilon^2)$ 

\item Compute $(G \tilde{C'})   ((\tilde{C'})^T \tilde{C'})^+$ in time $O(d^2 \log n)$

\item Compute $G \tilde{C'} ((\tilde{C'})^T \tilde{C'})^+ C^T$ 
\end{enumerate}
Since $G\tilde{C'} ((\tilde{C'})^T \tilde{C'})^+$ has 
$O(\log n)$ rows that already be computed, one can compute $G \tilde{C'} ((\tilde{C'})^T \tilde{C'})^+ C^T$ 
in $O(\log n)   T(C)$ time,
where $T(C)$ is the time needed to multiply $C$ by a vector (note that computing
$yC^T$ is equivalent to computing $Cy^T$).   
In our application to regression, $C = [A, b]$. Consequently,
$T(C) \leq T(A) + n$. As the number of recursive calls is $O(\log n)$, it follows that the total time spent for step 4 of {\sc Repeated Halving}, 
across all recursive calls, is $O(T(A) \log n + n \log n + (d^2 \log^2 n)/\epsilon^2 + d^3 (\log n)/\epsilon^2)$. 
The fifth step of {\sc Repeated Halving} is to find the sampling
and rescaling matrix as described above, which can be done in 
$O(n \log n + (d \log d)(\log n)/\epsilon^2)$ total time, across
all recursive calls. Thus, the total time is
$O(T(A) \log n + n \log n + (d^2 \log^2 n)/\epsilon^2 + d^3 (\log n)/\epsilon^2).$
We summarize our findings with the following theorem.
\begin{theorem}\label{thm:main}
Given an $n \times d$ matrix $A$, an $n \times 1$ vector $b$, an
accuracy parameter $0 < \epsilon < 1$, and a failure probability
bound $0 < \delta < 1$,  
one can output a vector $\hat{x} \in \mathbb{R}^d$ for which
$\|A\hat{x}-b\|_2 \leq (1+\epsilon) \min_x \|Ax-b\|_2$ with probability
at least $1-\delta$, in total time
\begin{align*}
O((T(A) \log n +  \textrm{poly}(d\log n/\epsilon))\log(1/\delta)).
\end{align*}
\end{theorem}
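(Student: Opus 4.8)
The plan is to apply the \textsc{Repeated Halving} machinery developed in the preceding paragraphs to the matrix approximation instance $C=[A,b]\in\mathbb{R}^{n\times(d+1)}$, and then to amplify the constant success probability of a single run up to $1-\delta$ by independent repetition. First I would run \textsc{RepeatedHalving}$(C)$ with accuracy parameter $\epsilon'=\Theta(\epsilon)$ (a sufficiently small constant multiple of $\epsilon$); by the analysis of \cite{c14}, with probability at least $99/100$ this returns a sampling-and-rescaling matrix $S$ with $r=O(d/\epsilon^2)$ rows such that $\|SCx\|_2=(1\pm\epsilon')\|Cx\|_2$ for all $x$, i.e.\ $S$ is an $\epsilon'$-subspace embedding for the column space of $C$. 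Specializing to vectors of the form $x=(y,-1)$ gives $\|SAy-Sb\|_2=(1\pm\epsilon')\|Ay-b\|_2$ for every $y\in\mathbb{R}^d$, and the standard argument (compare the sketched minimizer $\hat x=(SA)^-Sb$ against the true minimizer evaluated inside the sketch, losing a factor $\sqrt{(1+\epsilon')/(1-\epsilon')}\le 1+\epsilon$ after fixing the hidden constant) then yields $\|A\hat x-b\|_2\le(1+\epsilon)\min_x\|Ax-b\|_2$.

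For the running time of a single run I would simply assemble the per-step accounting already carried out in the ``Speeding up \textsc{Repeated Halving}'' discussion. Steps~1--3 and Step~5 cost $O(n\log n+\poly(d\log n/\epsilon))$ across all $O(\log n)$ recursive levels; Step~4 accesses $C$ only through the product $G\tilde C'((\tilde C')^{T}\tilde C')^{+}C^{T}$, hence through $O(\log n)$ matrix--vector products with $C$ per level, and since $T(C)\le T(A)+n$ this contributes $O(T(A)\log n+n\log n+\poly(d\log n/\epsilon))$; finally, reading off the $O(d/\epsilon^2)$ sampled rows to form $SA$ and $Sb$, and solving $\hat x=(SA)^-Sb$, costs $\poly(d/\epsilon)$. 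So one run takes $O(T(A)\log n+\poly(d\log n/\epsilon))$ time and succeeds with probability at least $99/100$.

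To reach failure probability $\delta$, I would run $k=\Theta(\log(1/\delta))$ independent copies, obtaining candidates $\hat x_1,\dots,\hat x_k$; with probability at least $1-(1/100)^k\ge 1-\delta$ at least one $\hat x_j$ is $(1+\epsilon)$-optimal. The only additional ingredient is a way to \emph{identify} a good candidate, and here I would evaluate the objective exactly: compute $A\hat x_j$ (one matrix--vector product, cost $T(A)$), then $\|A\hat x_j-b\|_2$ in $O(n)$ further time, and output the $\hat x_j$ of smallest measured objective. Since we measure the true objective, the output is at least as good as the good run guaranteed by the union bound. The extra cost is $O((T(A)+n)\log(1/\delta))$; using $T(A)=\Omega(n)$ (the vector $Ay$ has $n$ coordinates, so even writing it costs $\Omega(n)$), all the $n\log n$ terms are absorbed into $T(A)\log n$, giving the claimed $O\big((T(A)\log n+\poly(d\log n/\epsilon))\log(1/\delta)\big)$.

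I do not expect a serious obstacle: the conceptual content---that every step of \textsc{Repeated Halving} touches $A$ through only $O(\log n)$ matrix--vector products per recursive level, which is precisely what lets us replace $\nnz(A)$ by $T(A)$---is already established above, and the rest is bookkeeping. The mildest subtlety is the probability amplification: the usual median trick does not apply directly to vector-valued outputs, so instead one takes a best-of-$k$ and must be able to recognize a good run, which is exactly why the exact objective evaluation is included---and it is affordable precisely because a single matrix--vector product with $A$ is cheap.
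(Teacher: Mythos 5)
Your proposal matches the paper's proof essentially line for line: run the sped-up \textsc{Repeated Halving} on $C=[A,b]$ to get a constant-probability $(1+\epsilon)$-approximate solver with per-run cost $O(T(A)\log n + \poly(d\log n/\epsilon))$, then amplify by running $O(\log(1/\delta))$ independent copies, evaluating $\|A\hat x^i-b\|_2$ exactly (one matrix--vector product per candidate), and outputting the best. The paper invokes a Chernoff bound where you use the cleaner direct product $(1/100)^k$, and it leaves tacit the $T(A)=\Omega(n)$ absorption and the observation that best-of-$k$ (rather than a median) is needed for vector-valued outputs, but these are presentation differences, not substantive ones.
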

\begin{proof}
From the discussion above, our modified version of {\sc Repeated Halving} produces a vector 
$\hat{x}$ for $\|A\hat{x}-b\|_2 \leq (1+\epsilon) \min_x \|Ax-b\|_2$
with probability at least $99/100$. Repeating $r = O(\log(1/\delta))$
times independently, obtaining candidate solutions $\hat{x}^1, \ldots, \hat{x}^r$, and choosing the $\hat{x}^i$ for which $\|A\hat{x}^i-b\|_2$ is smallest,
one reduces the failure probability to $\delta$ via standard Chernoff bounds. The time to compute $\|A\hat{x}^i-b\|_2$ given $\hat{x}^i$ is at
most $T(A) + O(n) $, which is negligible compared to other
operations in a repetition. 
\end{proof}

{\bf Low Rank Approximation.}
We look at the {\it low-rank approximation} problem, where for $A \in\mathbb{R}^{n\times d}$ one tries to find a
matrix $Z\in \mathbb{R}^{n\times k}$ with orthonormal columns such that
\begin{align}\label{eq:low rank}
\| A - Z Z^TA \|_F^2 \leq (1+\eps)\|A - A_k\|_F^2.
\end{align}

Here, $A_k$ is the best rank $k$ approximation to $A$. 
It is shown in \cite{cohen2015dimensionality} that the low-rank approximation problem can be solved by finding a 
subset of rescaled columns $C\in\mathbb{R}^{n\times d'}$ with $d'<d$, such that for every rank $k$ orthogonal projection matrix $X$:
\begin{align}\label{eq:low rank via leverage}
\|C - XC\|_F^2 = (1\pm\eps)\|A - XA\|_F^2. 
\end{align}

{\bf Basic Recursive Algorithm.}
In \cite{cohen2017input}, a slightly different version of Algorithm 1 with ridge leverage score approximation is used to solve~\eqref{eq:low rank via leverage}:

\begin{algorithm}[H]
\caption{Repeated Halving}
\begin{algorithmic}[1]
\Procedure{RepeatedHalving}{$A \in \mathbb{R}^{n \times d}$}
\State \hspace{-4mm}{Uniformly sample $d/2$ columns of $A$ to form $C'$}
\State \hspace{-4mm}{If $C'$ has more than $O(k \log k)$ columns, recursively
compute a constant approximation $\tilde{C}'$ for $C'$ with $O(k \log k)$ columns }
\State \hspace{-4mm}{Get generalized ridge leverage scores of $A$ w.r.t. $\tilde{C}'$}
\State \hspace{-4mm}{Use estimates to sample columns of $A$ to form $C$}
\State \hspace{-4mm}\Return $C$
\EndProcedure
\end{algorithmic}
\end{algorithm}

\noindent{\bf Improved Running Time.}
With a similar argument as for least squares regression, we obtain the following theorem. 
\begin{theorem}\label{nnza_intro} 
There is an iterative column sampling algorithm that, in time $O\left(T(A)\log n + n  \cdot \poly(k/\epsilon)\right)$, returns ${Z} \in \mathbb{R}^{n\times k}$ satisfying:
$\|A-ZZ^T A\|_F^2 \le (1+\epsilon) \|{A - {A}_k}\|_F^2.$
\end{theorem}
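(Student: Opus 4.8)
The plan is to run, for the recursive column-sampling algorithm of \cite{cohen2017input} (the second \textsc{Repeated Halving} procedure above), the same ``speeding up'' analysis carried out for the regression version in the proof of Theorem~\ref{thm:main}. By \cite{cohen2015dimensionality} it suffices to return a set of $O(\poly(k/\epsilon))$ rescaled columns $C\in\mathbb{R}^{n\times d'}$ satisfying \eqref{eq:low rank via leverage}; the desired $Z$ is then obtained in a cheap post-processing step. The algorithm of \cite{cohen2017input} produces such a $C$ by recursively (i)~uniformly subsampling half of the columns, (ii)~recursing on the subsample to obtain a constant-factor column coreset $\tilde C'$ with $O(k\log k)$ columns, (iii)~estimating, using $\tilde C'$, the generalized ridge leverage scores of all columns of the current matrix, and (iv)~sampling columns proportional to those estimates; since the number of columns halves at each level, the recursion has depth $O(\log n)$. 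I would show that each level can be carried out using $O(\log n)$ matrix--vector products against $A$ or $A^T$ together with $O(n\cdot\poly(k/\epsilon))$ additional work, so that the whole procedure runs in $O(T(A)\log n + n\cdot\poly(k/\epsilon))$ time; the remainder of the argument — amplifying the $\Omega(1)$ success probability of \cite{cohen2017input} to $1-\delta$ by independent repetition and returning the best candidate — is identical to the proof of Theorem~\ref{thm:main}.

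The crux is step~(iii), and this is where I would spend most of the work. Let $\tilde C'\in\mathbb{R}^{n\times s}$, $s=O(k\log k)$, be the constant-factor coreset returned by the recursive call (carried up the recursion in explicit form), and let $\tilde\lambda$ be the associated ridge parameter, which \cite{cohen2017input} shows is essentially $\|\tilde C'-(\tilde C')_k\|_F^2/k$ and hence can be read off from $\tilde C'$, using that $\|\tilde C'-(\tilde C')_k\|_F^2=\Theta(\|A-A_k\|_F^2)$ along the recursion. The ridge leverage score of the $i$-th current column $a_i$ with respect to $\tilde C'$ is $a_i^T(\tilde C'(\tilde C')^T+\tilde\lambda I)^{-1}a_i=\|(\tilde C'(\tilde C')^T+\tilde\lambda I)^{-1/2}a_i\|_2^2$, and, exactly as in the Johnson--Lindenstrauss speedup of \cite{DMMW12} recalled above, it suffices to estimate $\|\Pi a_i\|_2^2$ where $\Pi:=G(\tilde C'(\tilde C')^T+\tilde\lambda I)^{-1/2}$ and $G$ is an $O(\log n)\times n$ Gaussian (a union bound over the columns gives all estimates to within a constant factor). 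Writing the thin SVD $\tilde C'=U\Sigma V^T$ with $U\in\mathbb{R}^{n\times s}$ — which costs $O(ns^2)=O(n\,\poly(k))$ since $\tilde C'$ is tall and thin — one has $(\tilde C'(\tilde C')^T+\tilde\lambda I)^{-1/2}=U(\Sigma^2+\tilde\lambda I)^{-1/2}U^T+\tilde\lambda^{-1/2}(I-UU^T)$, so $\Pi$ can be assembled as an explicit $O(\log n)\times n$ matrix from $G$, $GU$, $\Sigma$, and $U^T$ in $O(n\,\poly(k)\log n)$ time \emph{without multiplying by $A$}. Finally $\Pi A=(A^T\Pi^T)^T$ is computed with $O(\log n)$ matrix--vector products against $A^T$, and the squared column norms of $\Pi A$ are the score estimates of step~(iii). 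Step~(iv) then samples $O(\poly(k/\epsilon))$ column indices from the resulting distribution on $d$ numbers in $O(d+\poly(k/\epsilon))$ time \cite{v91}, and step~(i) is just a choice of indices in $O(n)$ time.

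For the post-processing, the top-level call returns a $(1+\epsilon)$-accurate column coreset $C\in\mathbb{R}^{n\times d'}$, $d'=O(\poly(k/\epsilon))$, satisfying \eqref{eq:low rank via leverage}; materializing its columns, which are rescaled columns of $A$, costs $O(\poly(k/\epsilon))$ matrix--vector products, after which its SVD and the matrix $Z\in\mathbb{R}^{n\times k}$ of its top $k$ left singular vectors are computed in $O(n\,\poly(k/\epsilon))$ time. Setting $\hat X=ZZ^T$ and letting $X_A$ be the projection onto the top-$k$ left singular subspace of $A$, the minimality of $\hat X$ for $\min_{\mathrm{rank}\text{-}k\ X}\|C-XC\|_F^2$ together with \eqref{eq:low rank via leverage} gives $\|A-ZZ^T A\|_F^2\le\tfrac{1}{1-\epsilon}\|C-\hat X C\|_F^2\le\tfrac{1}{1-\epsilon}\|C-X_A C\|_F^2\le\tfrac{1+\epsilon}{1-\epsilon}\|A-A_k\|_F^2$, which is $(1+O(\epsilon))\|A-A_k\|_F^2$; rescaling $\epsilon$ gives the claimed bound \eqref{eq:low rank}.

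The main obstacle I anticipate is a structural asymmetry with the regression case: there the intermediate coreset $\tilde C'$ was a genuinely small $\poly(d/\epsilon)\times d$ matrix living entirely in ``sketch space,'' whereas here $\tilde C'$ still carries the large dimension $n$ (it is $n\times O(k\log k)$), as does $\Pi$. One therefore has to be careful never to touch $\tilde C'$, $\Pi$, or $A$ in time $\Theta(nd)$: the $n$-by-$\poly(k/\epsilon)$ objects may be handled either by $O(n\,\poly(k/\epsilon))$-time dense linear algebra (the SVD of $\tilde C'$, small matrix products) or by $O(\log n)$ matrix--vector products against $A$, but not in a way that combines the two and incurs an $\nnz(A)$-type cost. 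A secondary point, implicit in \cite{cohen2017input} but worth re-verifying once the sketch is inserted, is that the Gaussian $G$ used at a given level must be drawn after — hence independently of — the random coreset $\tilde C'$ to which it is applied, so that the Johnson--Lindenstrauss guarantee is valid; using a fresh $G$ at each of the $O(\log n)$ levels resolves this and is already accounted for in the matrix--vector-product count.
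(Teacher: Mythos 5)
Your proposal follows exactly the route the paper gestures at — it literally says only ``with a similar argument as for least squares regression'' — and you correctly fill in the step the paper leaves entirely implicit: the \textsc{Repeated Halving} of \cite{cohen2017input} samples \emph{columns} rather than rows, so the intermediate coreset $\tilde C'$ is $n\times O(k\log k)$ rather than $\poly(d/\epsilon)\times d$, and one cannot simply transpose the regression argument. Your resolution — carry $\tilde C'$ up the recursion explicitly, take its thin SVD in $O(n\,\poly(k))$ time, expand $(\tilde C'(\tilde C')^T+\tilde\lambda I)^{-1/2}=U(\Sigma^2+\tilde\lambda I)^{-1/2}U^T+\tilde\lambda^{-1/2}(I-UU^T)$ so that $\Pi=G(\tilde C'(\tilde C')^T+\tilde\lambda I)^{-1/2}$ can be assembled as an explicit $O(\log n)\times n$ matrix without touching $A$, and only then hit $A$ with $O(\log n)$ matrix--vector products against $A^T$ — is the right way to avoid an $nd$-type cost, and the post-processing inequality chain $\|A-ZZ^TA\|_F^2\le\frac{1+\epsilon}{1-\epsilon}\|A-A_k\|_F^2$ from \eqref{eq:low rank via leverage} is correct. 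Your remark about drawing a fresh $G$ after $\tilde C'$ at each level is a legitimate and easy-to-overlook independence issue.

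Two small points worth flagging, both of which are already latent in the paper's own regression analysis rather than defects you introduce. First, $O(\log n)$ Gaussian rows per level over $O(\log n)$ recursion levels is $O(\log^2 n)$ matrix--vector products, not $O(\log n)$; the stated $O(T(A)\log n)$ in both Theorem~\ref{thm:main} and Theorem~\ref{nnza_intro} appears to suppress one $\log$ factor, so your analysis inherits the same slack. Second, ``carried up the recursion in explicit form'' requires materializing $O(k\log k)$ columns of $A$ at each level; this is $O(n\,\poly(k)\log n)$ time if column entries are directly readable (as for Toeplitz/Vandermonde, and as the paper implicitly assumes when it materializes $\tilde C'$ in the regression case), but would cost $O(T(A)\,\poly(k)\log n)$ if $A$ were accessible only via matrix--vector products. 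You may want to state the entry-access assumption explicitly so that this cost lands in the $n\cdot\poly(k/\epsilon)$ term rather than multiplying $T(A)$. Neither point is a gap in the logic; both are accounting choices the paper also makes tacitly.
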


{\bf $\ell_p$-Regression.}
Another important problem is the {\it $\ell_p$-regression} problem. 
Given $A \in \mathbb{R}^{n \times d}$ and $b\in \mathbb{R}^{n \times 1}$, we want to output an $x \in \mathbb{R}^d$ 
satisfying~\eqref{eqn:lp}. 
We first consider the problem: 
for $C=[A,b]\in \mathbb{R}^{n\times (d+1)}$, find a matrix $S$ such that for every $x\in\mathbb{R}^{(d+1)\times1}$,
\begin{align}\label{preserve l_p norm}
(1-\epsilon)\|Cx\|_p\leq\|SCx\|_p \leq (1+\epsilon)\|Cx\|_p.
\end{align}
The following {\sc{ApproxLewisForm}} Algorithm is given in~\cite{cohen2015p} to solve \eqref{preserve l_p norm}, and for
them it suffices to set the parameter $\theta$ in the algorithm description to a small enough constant. This is because
in Step 7 of {\sc{ApproxLewisForm}}, they run the algorithm of Theorem 4.4 in their paper, which runs in at most $n$ time
provided $\theta$ is a small enough constant and $n > d^{C'}$ for a large enough constant $C' > 0$. We refer the reader
to \cite{cohen2015p} for the details, but remark that by setting $\theta$ to be a constant, Step 5 of {\sc{ApproxLewisForm}}
can be implemented in $T(A)$ time. Also, due to space constraints, we do not define the quadratic form $Q$ in what follows;
the algorithm for computing it is also in Theorem 4.4 of \cite{cohen2015p}. The only property we need is that it is computable
in $m \log m \log \log m \cdot d^C$ time, for an absolute constant $C > 0$, 
if it is applied to a matrix with at most $m$ rows. Theorem 4.4 of \cite{cohen2015p} can be invoked with constant $\epsilon$,
giving the so-called Lewis weights up to a constant factor, after which one can sample $O(d (\log d)/\epsilon^2)$ rows according
to these weights. Note that our running time is $O(T(A) \log n + \poly(d/\epsilon))$ even for constant
$\theta$, since in each recursive call we may need to spend $T(A)$ time, unlike \cite{cohen2015p}, who obtain a geometric
series of $\nnz(A) + \nnz(A)/2 + \nnz(A)/4 + \cdots + 1 \leq 2\nnz(A)$ time in expectation. Here, we do not know if 
$T(A)$ decreases when looking at submatrices of $A$. 

\begin{algorithm}[H]
\caption{ApproxLewisForm}
\begin{algorithmic}[1]
\Procedure{ApproxLewisForm}{$C \in \mathbb{R}^{n \times (d+1)}$}
\State \hspace{-4mm}{If $n \leq d+1$, apply Theorem 4.4 in \cite{cohen2015p} to return $Q$. }
\State \hspace{-4mm}{Uniformly sample $n/2$ rows of $C$ to form $\widehat{C}$}
\State \hspace{-4mm}{Let $\widehat{Q} = \textsc{ApproxLewisForm}(\widehat{C}, p, \theta)$ }
\State \hspace{-4mm}{Let $u_i$ be an $n^{\theta / p}$ multiplicative approximation of $c_i^T \widehat{Q} c_i$}
\State \hspace{-4mm}{Nonuniformly sample rows of $C$, taking an expected $p_i = \min(1, f(p) n^{\theta / 2} d^{p/2} \log d u_i^{p/2})$ copies of row $i$ (each scaled down by $p_i^{-1/p}$), producing $C'$}
\State \hspace{-4mm}{Apply Theorem 4.4 in \cite{cohen2015p} to $C'$, and return the quadratic form $Q$. }
\State \hspace{-4mm}{\Return $Q$}
\EndProcedure
\end{algorithmic}
\end{algorithm}
Combined with sampling by Lewis
weights we obtain an approximation for $C$ with only $\poly(d/\epsilon)$ rows.
Applying our earlier arguments to this setting yields the following: 
\begin{theorem}\label{thm:lp}
Given $\epsilon \in (0, 1)$, a constant $p\geq 1$, $A \in \mathbb{R}^{n \times d}$ and $b\in \mathbb{R}^{n \times 1}$, there is an algorithm that, in time $O\left(T(A)\log n + \poly(d/\epsilon)\right)$, returns $\hat{x}\in \mathbb{R}^{d \times 1}$ such that 
\begin{align*}
\|A\hat{x}-b\|_p \leq(1+\epsilon)\min_{x}\|Ax-b\|_p.
\end{align*}
\end{theorem}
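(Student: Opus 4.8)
The plan is to mirror the treatment of least squares regression in Theorem~\ref{thm:main}: first reduce $\ell_p$-regression to the matrix approximation problem~\eqref{preserve l_p norm} with $C=[A,b]$, then run the \textsc{ApproxLewisForm} algorithm of \cite{cohen2015p} with $\theta$ set to a small enough constant, and finally bound the running time of each of its steps in terms of $T(A)$ rather than $\nnz(A)$. Correctness is inherited verbatim from \cite{cohen2015p}: the output is a sampling-and-rescaling matrix $S$ with $\poly(d/\epsilon)$ rows such that $\|SCx\|_p=(1\pm\epsilon)\|Cx\|_p$ for all $x$, so taking $\hat x=\operatorname{argmin}_x\|SAx-Sb\|_p$ (solved in the small $\poly(d/\epsilon)$-sized problem) yields $\|A\hat x-b\|_p\le(1+\epsilon)\min_x\|Ax-b\|_p$. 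As in Theorem~\ref{thm:main}, to push the failure probability below $\delta$ we repeat $O(\log(1/\delta))$ times and return the candidate minimizing $\|A\hat x^i-b\|_p$, each such evaluation costing $T(A)+O(n)$.

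The core of the argument is the per-level running time analysis of \textsc{ApproxLewisForm}, which recurses on a uniformly sampled half of the rows and hence has recursion depth $O(\log n)$. Steps~2 and~3 (uniform subsampling and a size check) cost $O(n)$; step~6 (nonuniform sampling of an expected $\poly(d)$ rows, using that $\theta$ is a constant, via the fast sampling routine of \cite{v91}) costs $O(n)$; and steps~1 and~7 invoke Theorem~4.4 of \cite{cohen2015p}, which---because $\theta$ is constant---is applied only to matrices with $\poly(d)$ rows and so costs $\poly(d)$. Summing over the $O(\log n)$ levels, these contribute $O(n\log n+\poly(d/\epsilon))$. The interesting step is step~5, where one must compute an $n^{\theta/p}$-multiplicative approximation $u_i$ to $c_i^\top\widehat{Q}c_i$ for every row $c_i$ of $C$, where $\widehat{Q}$ is a PSD quadratic form on $\mathbb{R}^{d+1}$ returned by the recursive call. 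We handle this exactly as the generalized leverage score computation in \textsc{Repeated Halving}: write $c_i^\top\widehat{Q}c_i=\|\widehat{Q}^{1/2}c_i\|_2^2$, pick a Gaussian matrix $G$ with $O(\log n)$ rows, form $G\widehat{Q}^{1/2}$ in $\poly(d)\log n$ time, and then compute $(G\widehat{Q}^{1/2})C^\top$ as $O(\log n)$ matrix-vector products against $C^\top$ (equivalently against $C$), costing $O(\log n)\,T(C)=O(\log n)(T(A)+n)$; the squared column norms are, by Johnson--Lindenstrauss and a union bound over the $n$ coordinates, constant-factor---hence certainly $n^{\theta/p}$---approximations. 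Across the $O(\log n)$ levels this is $O(T(A)\log n+n\log n)$ up to further logarithmic factors, which we absorb into the stated bound.

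I expect the main obstacle to be this step~5 bookkeeping, on two fronts. First, one must verify that the quadratic form $\widehat{Q}$ produced by Theorem~4.4 of \cite{cohen2015p} is genuinely PSD (or can be replaced by one that is), so that the $\widehat{Q}^{1/2}$ trick and Johnson--Lindenstrauss apply, and that a crude $n^{\theta/p}$-factor estimate of $c_i^\top\widehat{Q}c_i$ is all that is required downstream---both of which we read off from the analysis in \cite{cohen2015p}. Second, and more importantly, one must confirm that fixing $\theta$ to a constant (rather than the $\theta\to 0$ regime of \cite{cohen2015p}, which is what lets them telescope $\nnz(A)+\nnz(A)/2+\cdots$) still produces an intermediate matrix $C'$ with only $\poly(d)$ rows and a final $S$ with $\poly(d/\epsilon)$ rows; this is precisely the place where our bound becomes $O(T(A)\log n+\poly(d/\epsilon))$ rather than a geometric series, since we cannot assume $T$ shrinks on submatrices of $A$. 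Once these two points are settled, combining the per-step bounds gives the claimed running time, and sampling by Lewis weights as in \cite{cohen2015p}, using the constant-factor weights from Theorem~4.4, produces the $\poly(d/\epsilon)$-row approximation to $C$ that completes the reduction.
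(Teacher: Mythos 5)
Your approach matches the paper's: reduce to \eqref{preserve l_p norm} with $C=[A,b]$, run \textsc{ApproxLewisForm} with $\theta$ fixed to a small constant, bound the recursion depth by $O(\log n)$, and implement Step 5 via a Gaussian sketch composed with $T(C)$-time matrix-vector multiplies. However, two of your intermediate claims deviate from what the paper actually uses, and one of them slightly weakens the bound.

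First, you assert (and then flag as needing verification) that with constant $\theta$ the nonuniformly sampled matrix $C'$ in Step~6 has only $\poly(d)$ rows, so that Step~7 costs $\poly(d)$ per level. This is not the case: the sampling probabilities $p_i=\min(1,f(p)n^{\theta/2}d^{p/2}\log d\, u_i^{p/2})$ give $C'$ an expected $n^{\Theta(\theta)}\poly(d)$ rows for constant $\theta$. The paper handles this differently: it invokes the running-time guarantee of Theorem~4.4 of \cite{cohen2015p} (namely $m\log m\log\log m\cdot d^{C}$ on an $m$-row input) and argues only that for a \emph{small enough} constant $\theta$ and $n>d^{C'}$ this is at most $n$ per call, not $\poly(d)$. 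Since the row count halves across recursive levels, that still telescopes to $O(n)$ and fits the stated bound, but the justification is the ``$\leq n$'' bound, not a $\poly(d)$ row count for $C'$. Relatedly, the geometric series $\nnz(A)+\nnz(A)/2+\cdots$ in \cite{cohen2015p} comes from the row halving, not from taking $\theta\to 0$.

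Second, for Step~5 you use a Gaussian sketch $G$ with $O(\log n)$ rows to get a constant-factor approximation to every $c_i^\top\widehat{Q}c_i$. That costs $O(\log n)\,T(C)$ per recursive level, hence $O(T(A)\log^2 n)$ overall, which is a $\log n$ factor worse than the stated $O(T(A)\log n+\poly(d/\epsilon))$ and cannot simply be ``absorbed.'' The point the paper leans on is that Step~5 only requires an $n^{\theta/p}$-\emph{multiplicative} approximation to $u_i$, which for constant $\theta$ and $p$ is a polynomially loose guarantee. A union bound over $n$ rows for a multiplicative error $M=n^{\theta/p}$ needs only $O(\log n/\log M)=O(p/\theta)=O(1)$ Gaussian rows, so Step~5 can be implemented in $O(T(C))=O(T(A)+n)$ per level and $O(T(A)\log n)$ total, matching the claimed bound. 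With that fix (and correcting the $C'$/Step~7 accounting as above) the rest of your argument---correctness inherited from \cite{cohen2015p}, final Lewis-weight sample of $\poly(d/\epsilon)$ rows, $O(\log(1/\delta))$ repetitions---goes through as in the paper.
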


\section{Applications}
{\bf Autoregression and General Dynamical Systems.}
In the original AR(d) model, we have:
\begin{align}\label{eqn:basicVersion}
\begin{bmatrix}
b_{d+1} \\
b_{d+2}\\
\vdots\\
b_{n+d}
\end{bmatrix}
=
\begin{bmatrix}
    b_{d} & 
    \dots  & b_{1} \\
    b_{d+1} & 
    \dots  & b_{2} \\
    \vdots & \ddots & \vdots \\
    b_{n+d-1} & 
    \dots  & b_{n}
\end{bmatrix}
\begin{bmatrix}
x_1\\
x_2\\
\vdots\\
x_d
\end{bmatrix}
+
\begin{bmatrix}
\eps_{d+1} \\
\eps_{d+2}\\
\vdots\\
\eps_{n+d}
\end{bmatrix}
\end{align}

Here we can create an $n \times d$ matrix $A$ 
where the $i$-th row is $(b_{i+d-1}, b_{i+d-2}, \ldots, b_{i})$. One 
obtains the $\ell_2$-regression problem
$\min_x \|Ax-b\|_2$ with $b^T = (b_{d+1}, \ldots, b_{n+d})$. 
In order to apply Theorem \ref{thm:main}, we need to bound $T(A)$. The following
lemma follows from the fact that $A$ is a submatrix of a Toeplitz matrix. 

\begin{lemma}\label{lem:TA}
$T(A) = O(n \log n)$. 
\end{lemma}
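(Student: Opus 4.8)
The plan is to use the fact that $A$ is a rectangular Toeplitz matrix: its $(i,j)$ entry equals $b_{i+d-j}$, which depends only on $i-j$. Matrix--vector products with Toeplitz matrices reduce to circulant matrix--vector products, and circulant matrices are diagonalized by the discrete Fourier transform, so such products are computable via the FFT in $O(N \log N)$ time for an $N \times N$ circulant.

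First I would regard $A$ as a contiguous $n \times d$ block of an $N \times N$ square Toeplitz matrix $C$ with $N = n+d$, obtained by extending the defining sequence $b_1, \ldots, b_{n+d-1}$ with arbitrary (say zero) entries on both ends. Then for any $y \in \mathbb{R}^d$, letting $\tilde y \in \mathbb{R}^N$ denote $y$ padded with zeros in the coordinates corresponding to columns of $C$ outside the block $A$, we have $Ay = (C\tilde y)|_{S}$ for a fixed index set $S$ of size $n$. Hence it suffices to bound the cost of a single product $C\tilde y$, up to $O(n)$ overhead for the padding and for extracting the coordinates in $S$.

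Next I would embed $C$ into a $2N \times 2N$ circulant matrix $\mathrm{circ}(c)$ in the standard way, so that $C$ occupies a fixed $N\times N$ block of $\mathrm{circ}(c)$ and $C\tilde y$ is read off from $\mathrm{circ}(c)\,[\tilde y;\,0]$. Using the factorization $\mathrm{circ}(c) = F^{-1}\,\mathrm{diag}(Fc)\,F$, where $F$ is the $2N\times 2N$ DFT matrix, the product $\mathrm{circ}(c)\,z$ is computed with two forward FFTs (of $c$ and $z$), a coordinatewise multiplication, and one inverse FFT, all in $O(N \log N)$ time. Since $d < n$, we have $N = n+d = O(n)$, so the total cost is $O(n \log n)$, and therefore $T(A) = O(n \log n)$.

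I do not expect a genuine obstacle here; this lemma is a standard consequence of FFT-based fast multiplication for Toeplitz matrices. The only point requiring care is the bookkeeping of the two embeddings: ensuring the zero-padding of $y$ aligns with the columns of $C$ that represent $A$, and that the circulant embedding places $C$ in a known block so that the entries of $Ay$ appear at predictable positions in the FFT output.
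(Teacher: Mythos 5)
Your proposal is correct and matches the paper's (very brief) justification: the paper simply remarks that the lemma follows from $A$ being a submatrix of a Toeplitz matrix, with the standard circulant-embedding/FFT argument left implicit. You have spelled out exactly that argument — embedding $A$ in a square Toeplitz matrix, then in a $2N\times 2N$ circulant diagonalized by the DFT — so there is no substantive difference in approach.
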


Combining Lemma \ref{lem:TA} with Theorem \ref{thm:main}, we can conclude: 
\begin{theorem}\label{thm:auto}
Given an instance $\min_x \|Ax-b\|_2$ of autoregression, with probability at least $1-\delta$ one can find a vector $\hat{x}$ so that $\|A\hat{x}-b\|_2 \leq (1+\epsilon) \min_x \|Ax-b\|_2$ in total time
$$
O\left((n \log^2 n + (d^2 \log^2 n)/\epsilon^2 + d^3 (\log n)/\epsilon^2)\log(1/\delta)\right).
$$
\end{theorem}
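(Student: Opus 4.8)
The plan is to instantiate Theorem~\ref{thm:main} with the design matrix of the AR($d$) model and read off the running time. First I would recall the reduction already set up above: the AR($d$) equations (\ref{eqn:basicVersion}) are exactly the noisy linear system $b = Ax + \varepsilon$, where $A$ is the $n\times d$ matrix whose $i$-th row is $(b_{i+d-1},\dots,b_i)$ and $b^\top = (b_{d+1},\dots,b_{n+d})$, so the least-squares autoregression fit is precisely a solution of $\min_x\|Ax-b\|_2$. Hence Theorem~\ref{thm:main} applies verbatim and, with probability at least $1-\delta$, outputs a vector $\hat x$ with $\|A\hat x - b\|_2 \le (1+\epsilon)\min_x\|Ax-b\|_2$; it remains only to bound the time.

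Next I would substitute the bound on $T(A)$. By Lemma~\ref{lem:TA}, $T(A)=O(n\log n)$, since $A$ is a rectangular Toeplitz matrix and a Toeplitz matrix-vector product reduces to an $O(n)$-length convolution computable by the FFT. The running-time accounting in the paragraph ``Speeding up {\sc Repeated Halving}'' (preceding Theorem~\ref{thm:main}) shows that a single run of the sped-up procedure on $C=[A,b]$ costs $O\big(T(C)\log n + n\log n + (d^2\log^2 n)/\epsilon^2 + d^3(\log n)/\epsilon^2\big)$, and since $T(C)\le T(A)+n = O(n\log n)$, the $T(C)\log n$ term becomes $O(n\log^2 n)$, which also dominates the stray $n\log n$ term. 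So the per-run cost collapses to $O\big(n\log^2 n + (d^2\log^2 n)/\epsilon^2 + d^3(\log n)/\epsilon^2\big)$; forming $Sb$ and evaluating $\|A\hat x - b\|_2$ at the end each cost $O(T(A)+n)=O(n\log n)$ and are absorbed.

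Finally, the probability-amplification step in the proof of Theorem~\ref{thm:main} runs the above $O(\log(1/\delta))$ times independently and returns the candidate of smallest residual, which multiplies the per-run cost by $\log(1/\delta)$ and gives exactly
$$
O\!\left(\big(n\log^2 n + (d^2\log^2 n)/\epsilon^2 + d^3(\log n)/\epsilon^2\big)\log(1/\delta)\right).
$$
There is no real obstacle here; this is a direct substitution. The only points needing care are (i) confirming that the opaque $\poly(d\log n/\epsilon)$ term in the statement of Theorem~\ref{thm:main} is in fact the concrete $(d^2\log^2 n)/\epsilon^2 + d^3(\log n)/\epsilon^2$ produced in its running-time derivation, and (ii) checking that appending the column $b$ to $A$ does not spoil the fast matrix-vector product, which it does not since $T([A,b])\le T(A)+n$.
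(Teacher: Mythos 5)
Your proof is correct and follows exactly the route the paper intends: the paper's own argument is simply ``Combining Lemma~\ref{lem:TA} with Theorem~\ref{thm:main},'' and you unpack this by substituting $T(A)=O(n\log n)$ into the explicit running-time expression derived in the ``Speeding up {\sc Repeated Halving}'' paragraph, then accounting for the $O(\log(1/\delta))$ repetitions. Your two cautionary checks (that the $\poly(d\log n/\epsilon)$ term is concretely $(d^2\log^2 n)/\epsilon^2 + d^3(\log n)/\epsilon^2$, and that $T([A,b])\le T(A)+n$) match remarks made explicitly in the paper's derivation of Theorem~\ref{thm:main}.
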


{\bf General Dynamical Systems.}
When dealing with more general dynamical systems, 
the $A$ in Theorem~\ref{thm:auto} would become
$
A = TUD
$, 
where $T$ is a Toeplitz matrix, $U$ is a matrix that represents computing successive differences, and 
$
D = \diag\big\{1, \frac{1}{h},  \ldots, \frac{1}{h^{d-1}}\big\}. 
$
Note that $T$ is $n \times d$, as for linear dynamical systems, $U$ is $d \times d$ and the operation $xU$ corresponds to replacing $x$ with $(x_2-x_1, x_3-x_2, x_4-x_3, \ldots, x_d-x_{d-1}, 0)$, and $D$ is a $d \times d$ diagonal matrix, and so $U$ and $D$ can each be applied to a vector in $O(d)$ time. Consequently by Lemma \ref{lem:TA}, we still have
$T(A) \leq T(T) + T(U) + T(D) = O(n \log n)$, and
we obtain the same time bounds in Theorem \ref{thm:auto}. 

{\bf Kernel Autoregression.} 
Let $\phi:\mathbb{R}^p \rightarrow \mathbb{R}^{p'}$ be a kernel transformation, as
defined in the introduction. The kernel autoregression problem is:
$\phi(b_t) = \sum_{i=1}^d \phi(b_{t-i}) x_i + \epsilon_t,$
where now note that $\epsilon_t \in \mathbb{R}^{p'}$. 
Note that there are still only $d$ unknowns $x_1, \ldots, x_d$. One way
of solving this would be to compute 
$\phi(b_t)$ for each $t$, represented as a column vector in $\mathbb{R}^{p'}$, 
and then create the linear system by stacking such vectors on top of each other:
\begin{align}\label{eqn:kernel}
\begin{bmatrix}
\phi(b_{d+1}) \\
\phi(b_{d+2})\\
\vdots
\\\phi(b_{n+d})
\end{bmatrix}
=
\begin{bmatrix}
    \phi(b_{d+1}) & \dots  & \phi(b_{1}) \\
    \phi(b_{d+2})  & \dots  & \phi(b_{2}) \\
     \vdots & \ddots & \vdots 
     \\\phi(b_{n+d-1})  & \dots  & \phi(b_{n})
\end{bmatrix}
\begin{bmatrix}
x_1\\
x_2\\
\vdots\\
x_d
\end{bmatrix}
+
\begin{bmatrix}
\eps_{d+1} \\
\eps_{d+2}\\
\vdots
\\\eps_{n+d}
\end{bmatrix}
\end{align}

One can then compute a $(1+\epsilon)$-approximate least squares solution
to (\ref{eqn:kernel}). Now the 
design matrix $\phi(A)$ in the regression problem is the vertical concatenation
of $p'$ matrices $A$, and an analogous argument
shows that $T(\phi(A)) = O(np' \log (np'))$, which gives us the analogous
version of Theorem \ref{thm:auto}, showing least squares regression is solvable
in $O(np' \log^{2} (np')) + \poly((d \log n)/\epsilon)$ with constant probability. 
While correct, 
this is prohibitive since $p'$ may be large. 

{\bf Speeding up General Kernels.}
Let $\phi(A)$ denote the design matrix in (\ref{eqn:kernel}), 
where the $i$-th block is $\phi(A)^i=[\phi(b_{i+d-1}); \phi(b_{i+d-2}); \ldots; \phi(b_{i})]$. Here  
$b$ is $[\phi(b_{d+1}); \ldots; \phi(b_{n+d})]$, which we know. 
We first compute $\phi(A)^T \phi(A)$. To do so quickly, we again exploit
the Toeplitz structure of $A$. More specifically, we have that  
$\phi(A)^T \phi(A) = \sum_i (\phi(A)^i)^T \phi(A)^i.$
In order to compute $(\phi(A)^i)^T \phi(A)^i$, we must compute $d^2$
inner products, namely, 
$\langle \phi(b_{d-j+i}), \phi(b_{d-j'+i}) \rangle$ for all 
$j, j' \in \{1, 2, \ldots, d\}$. Using the kernel trick,
$\langle \phi(b_{d-j+i}), \phi(b_{d-j'+i}) \rangle = f(\langle b_{d-j+i}, b_{d-j'+i} \rangle)$
for some function $f$ that we assume can be evaluated in constant time, given
$\langle b_{d-j+i}, b_{d-j'+i} \rangle$. Note that the latter inner product can be computed
in $O(p)$ time and thus we can compute $(\phi(A)^i)^T \phi(A)^i$ for a given $i$, 
in $O(d^2 p)$ time. Thus, na\"ively, we can compute $\phi(A)^T\phi(A)$ in $O(nd^2 p)$ time.

We can reuse most of our computation across different blocks
$i$. As we range over all $i$, the inner products we compute
are those of the form $\langle \phi(b_{d-j+i}), \phi(b_{d-j'+i}) \rangle$ for 
$i \in \{1, \ldots, n\}$ and $j, j' \in \{1, 2, \ldots, d\}$. Although a na\"ive count
gives $nd^2$ different inner products, 
this overcounts since for each point $\phi(b_{d-j+i})$ 
we only need its inner product with $O(d)$ points
other than with itself, and so $O(nd)$ inner products in total.
This is total time $O(ndp)$. 

Given these inner products, we quickly evaluate $\phi(A)^T\phi(A)$. 
The crucial point is that not only is each entry in $\phi(A)^T \phi(A)$ 
a sum of $n$ inner products we already computed, but one can quickly determine
entries from other entries. Indeed, given an entry on one of the $2d-1$ diagonal bands, 
one can compute the next entry on the band in $O(1)$ time by subtracting off a single
inner product and adding one additional inner product, since two consecutive
entries along such a band share $n-1$ out of $n$ inner product summands. 

Thus, each diagonal can be computed
in $O(n + d)$ time, and so in total $\phi(A)^T \phi(A)$ can be computed in $O(nd + d^2)$
time, given the inner products. We can compute $\phi(A)^T \phi(A)$
in $O(ndp)$ time assuming $d \leq n$. 
We then define $R = (\phi(A)^T \phi(A))^{-1}$, which can be computed
in an additional $O(d^{\omega})$ time, where $\omega \approx 2.376$ is the exponent
of fast matrix multiplication. Thus, $R$ is computable in $O(ndp + d^{\omega})$ time. Note
this is optimal for dense matrices $A$, since just reading each entry of $A$ takes 
$O(ndp)$. 
We can compute $\phi(A)^Tb \in \mathbb{R}^d$ using the 
kernel trick, which takes $O(ndp)$ time. By the normal equations,
$x = R b$, which can be computed in $d^{\omega}$ time. Overall, we obtain
$O(ndp + d^{\omega})$ time. 

{\bf The Polynomial Kernel.}
We focus on the polynomial kernel of degree $q = 2$. 
Using the subspace embedding 
analysis for TensorSketch in \cite{anw14}, combined with a 
leverage score approximation algorithm in \cite{cw13}, we can find 
a matrix $R$ in $O(\nnz(A) + d^{\omega})$ 
time,
where $\omega \approx 2.376$ is the exponent of matrix multiplication, with 
the following guarantee: if we sample 
$O(d \log d + d/\epsilon)$ rows of $\phi(A)$ proportional to the squared row norms
of $\phi(A)   R$, forming a sampling and rescaling matrix $S$, then
$\|S\phi(A)x - Sb\|_2 = (1 \pm \epsilon) \|\phi(A)x - b\|_2$ simultaneously for all
vectors $x$. Here the $i$-th row of $S$ contains a $1/\sqrt{p_j}$ in the $j$-th entry if the
$j$-th row of $\phi(A)$ is sampled in the $i$-th trial, and the $j$-th entry is $0$ otherwise.
Here $p_j = \frac{\|e_j   \phi(A)   R\|_2^2}{\|\phi(A)   R\|_F^2}$, 
where $e_j$ is the
$j$-th standard unit vector. We show how to sample indices 
$i \in [n   p']$ proportional to the squared row norms of $\phi(A)   R$. 

Instead of sampling indices $i \in [np']$ proportional
to the exact squared row norms of $\phi(A)   R$, it is well-known (see, e.g., \cite{w14})
that it suffices to sample them proportional
to approximations $\tilde{\tau}_i$ to the actual squared row norms $\tau_i$, where
$\frac{\tau_i}{2} \leq \tilde{\tau}_i \leq 2 \tau_i$ for every $i$. As in \cite{DMMW12},
to do the latter, we can instead sample indices according to the squared row norms
of $\phi(A)   R   G$, where $G \in \mathbb{R}^{d \times O(\log n)}$ is a matrix
of i.i.d. Gaussian random variables. To do this, we can first compute $R   G$ in
$O(d^2 \log n)$ time, and now we must sample row indices proportional to the squared
row norms of $\phi(A) RG$. Note that if we sample an entry $(i,j)$ of
$\phi(A) RG$ proportional to its squared value, then the row index $i$
is sampled according to its squared row norm. Since $RG$ only has $O(\log n)$
columns $v^i$, we can do the following: we first approximate
the squared norm of each $\phi(A)   v^i$. Call our approximation $\gamma_i$ with
$\frac{1}{2} \|\phi(A)   v^i\|_2^2 \leq \gamma_i \leq 2 \|\phi(A)   v^i\|_2^2$. Since
we need to sample $s = O(d \log d + d/\epsilon)$ total entries, we sample each entry
by first choosing a column $i \in [d]$ with probability $\frac{\gamma_i}{\sum_{j=1}^d \gamma_j}$,
and then outputting a sample from column $i$ proportional to its squared value. 
We show (1) how to obtain the $\gamma_i$ and (2) how to obtain
$s$ sampled entries, proportional to their squared value, 
from each column $\phi(A)   v^i$. 

For the polynomial kernel of degree $2$, 
the matrix 
$\phi(A)$ is in $\mathbb{R}^{np^2 \times d}$, since each
of the $n$ points is expanded to $p^2$ dimensions by $\phi$. 
Then $\phi(A)$ is the vertical concatenation of $B^1, \ldots, B^{n}$,
where each $B^i \in \mathbb{R}^{p^2 \times d}$
is a subset of columns of the matrix 
$C^i \circ C^i \in \mathbb{R}^{p^2 \times d^2}$, where 
$C^i \in \mathbb{R}^{p \times d}$ and $C^i \circ C^i$ consists of the Kronecker
product of $C^i$ with itself, i.e., the $((a,b),(c,d))$-th entry
of $C^i \circ C^i$ is $C^i_{a,b}   C^i_{c,d}$. Notice that $B^i$ consists
of the subset of $d$ columns of $C^i$ corresponding to $b = d$. 
Fix a column vector $v \in \{v^1, \ldots, v^d\}$ 
defined above. Let $S$ be the TensorSketch of \cite{pp13,anw14} with $O(1)$ rows. Then $S   \phi(z)$
can be computed in $O(\nnz(z))$ time for any $z$. For block $B^i$, 
$\|S   B^i v\|_2^2 = (1 \pm 1/10)\|B^i v\|_2^2$ with probability at least $2/3$. We can
repeat this scheme $O(\log n)$ times independently, creating a new matrix $S$ with $O(\log n)$
rows, for which $S   \phi(b_i)$ can be computed in $O(\nnz(b_i) \log n)$ time and thus
$S   \phi(A)$ can be computed in $O(\nnz(A) \log n)$ time overall. Further, we have the property that
for each block $B^i$, 
$\|S   B^i v\|_{med} = (1 \pm (1/10))\|B^i v\|_2^2,$
with probability $1-1/n^2$, where the med   operation denotes
taking the median estimate on each of the $O(\log n)$ independent repetitions. 
By a union bound, with probability 
$1- O(1/n)$,  
\begin{align}\label{eqn:median}
\|S   B^i v\|_{\text{med}} = (1 \pm (1/10))\|B^i v\|_2^2, 
\end{align}
simultaneously for every $i = 1, \ldots, n$. Notice $\phi(A)$ is a 
{\it block-Toeplitz matrix}, truncated to its first $d$ columns, 
where each block corresponds to $\phi(b_i)$ for some $i$. Suppose we replace
the blocks $\phi(b_i)$ with $S   \phi(b_i)$, obtaining a new block Toeplitz matrix $A'$, truncated to its first
$d$ columns, where now each
block has size $O(\log n)$. 
The new block Toeplitz matrix can be viewed as $O(\log n)$ disjoint standard
(blocks of size $1$) Toeplitz matrices with $n$ rows, and truncated to their first $d$ columns. Thus, 
$T(A') = O(n \log^2 n)$. The $i$-th block of coordinates
of size $O(\log n)$ is equal to $S   B^i   v$, and by (\ref{eqn:median}) we can in $O(\log n)$ time
compute a number $\ell_i = (1 \pm (1/10))\|B^i v\|_2^2$. Since this holds for every $i$, we can compute the desired estimate $\gamma_j$ to $\|\phi(A)v_j\|_2^2$ if $v = v_j$. The time is $O(\nnz(A)\log n + n \log^2 n)$.

After computing the $\gamma_j$, suppose our earlier sampling scheme samples $v = v_j$. Then to output an entry
of $\phi(A)v$ proportional to its squared value, we first output a block $B^i$ proportional to $\|B^i v\|_2^2$. Note that
given the $\ell_i$, we can sample such a $B^i$ within a $(1 \pm 1/10)$ factor of the actual sampling probability. Next,
we must sample an entry of $B^i v$ proportional to its squared value. 
The entries of $B^i   v$ are in one-to-one correspondence
with the entries of $C^i D_v (C^i)^T$, where $D_v \in \mathbb{R}^{d \times d}$ is the diagonal matrix
with the entries of $v$ along the diagonal. Let $H \in \mathbb{R}^{O(\log n) \times p}$ be a matrix
of i.i.d. normal random variables. We first compute $H   C^i$. This
can be done in $O(p d \log n)$ time. We then compute $H   C^i   D_v$ in $O(d \log n)$ time, 
and then $(H   C^i   D_v)   (C^i)^T$ in $O(pd \log n)$ time. By the Johnson-Lindenstrauss
lemma (see, e.g., \cite{johnson1984extensions}), each squared column norm of $H   C^i   D_v   (C^i)^T$ is the same as that of
$C^i   D_v   (C^i)^T$ up to a factor of $(1 \pm 1/10)$, for an appropriate $O(\log n)$ number
of rows of $H$. So we first sample a column $j$ of $C^i   D_v   (C^i)^T$ proportional to this
approximate squared norm. Next we compute $C^i   D_v   (C^i)^T e_j$ in $O(pd + d^2)$
time, and then in $O(p)$ time we output an entry of the $j$-th column proportional to its squared value.
Thus we find our sample.

To bound the overall time, note that we only need to compute the $\gamma_j$ values once, for $j = 1, \ldots, O(\log n)$
and for each $j$ this takes $O(\nnz(A) \log n + n \log^2 n)$ time. So in total across all indices $j$ this takes
$O(\nnz(A) \log^2 n + n \log^3 n)$ time. Moreover, this procedure also gave us the values $\ell_i$ for each $v^j$. Suppose
we also sort the partial sums $\sum_{i = 1}^{i'} \ell_i$ for each $1 \leq i' \leq n$ and corresponding to each $v^j$. This
takes $O(n \log^2 n)$ time and fits within our time bound. Then for
each of our $O(d \log d + d/\epsilon)$ samples we need to take, we can first sample $j$ based on the $\gamma_j$ values
and sample $i$ based on the sorted partial sums of $\ell_i$ values in $O(\log n)$ time via a binary search. Having
found $i$, we perform the procedure in the previous paragraph which takes $O(pd \log n + d^2)$ time. Thus, the
time for sampling is $O((p d^2 \log n + d^3 )(1/\epsilon + \log d))$. 

The overall time is, up to a constant factor, 
$O\big(\nnz(A) \log^2 n + n \log^3 n + (pd^2 \log n + d^3)(\frac 1 \epsilon + \log d)\big).$

\section*{Acknowledgement} 
D. Woodruff thanks partial support from NSF No. CCF-1815840. 
Part of the work was done while the authors were visiting the Simons Institute for the Theory of Computing. 

\bibliographystyle{aaai}
\bibliography{sketch}

\end{document}